\documentclass{llncs}
\usepackage{times}
\usepackage{helvet}
\usepackage{courier}
\usepackage{amssymb}
\usepackage{amsmath}
\usepackage{ntheorem}
\usepackage{graphics}
\frenchspacing

\newcommand{\MICE}{{\sc Mice}}
\newcommand{\GUROBI}{{Gurobi}}
\newcommand{\CPLEX}{{IBM CPLEX}}

\newcommand{\DEVIATION}{\texttt{Deviation}}
\newcommand{\GCC}{\texttt{Gcc}}
\newcommand{\ELEMENT}{\texttt{Element}}
\newcommand{\ALLDIFF}{\texttt{Alldifferent}}

\newcommand{\INTERDISTANCE}{\texttt{InterDistance}}

\newcommand{\DISTANCE}{\texttt{DistanceXYZ}}

\def\qed{\unskip\nobreak\hfil
\penalty50\hskip1em\null\nobreak\hfil{$\square$}
\parfillskip=0pt\finalhyphendemerits=0\endgraf}
\setlength{\pdfpagewidth}{8.5in}
\setlength{\pdfpageheight}{11in}
\pdfinfo{
}
\setcounter{secnumdepth}{1}  
 \begin{document}
%
\title{
``Model and Run" Constraint Networks\\ with a MILP Engine
}
\author{
Thierry Petit\\
}
\institute{
Worcester Polytechnic Institute\\
tpetit@\{wpi.edu, mines-nantes.fr\}
}
\maketitle
\begin{abstract}
Constraint Programming (CP) users need significant expertise 
in order to model their problems appropriately, notably to select propagators and search strategies. This puts the brakes on a broader uptake of CP. 
In this paper, we introduce \MICE, a complete Java CP modeler that can use any Mixed Integer Linear Programming (MILP) solver as a solution technique. 
Our aim is to provide an alternative tool for democratizing the ``CP-style" modeling thanks to its simplicity of use, with reasonable solving capabilities. 
Our contributions include new decompositions of (reified) constraints and constraints on numerical variables. 
\end{abstract}

\section{Introduction}
\begin{sloppypar}
Modern MILP solvers, e.g., \CPLEX~or \GUROBI,~can be defined as 
``model and run" systems~\cite{DBLP:conf/cp/Puget04}. They embed dynamic procedures so as to to make simple models  solved 
efficiently~\cite{artigues12}. Conversely, despite CP is a declarative technology, CP users need significant expertise 
in order to model their problems appropriately. This expertise is required to select propagators and to state a search strategy, which are main components of the modeling phase. 
This may put the brakes on a broader uptake of CP. 

In this paper, we present \MICE, a simple API to solve CP models, which may use any MILP engine called from Java.  
Our objective is \emph{not} to design a winner between classical, propagation-based CP solvers and MILP engines. 
Obviously, some problems are best solved by CP and other 
best solved using MILP; and other require hybridization. Our aim is to help democratizing the ``CP-style" modeling standards by providing a black-box tool where models are 
just variables plus constraints, with solving capabilities good enough for prototyping and teaching. 
This first step might convince practitioners who are not researchers in optimization or just discover the existence of CP to further increase their expertise. 
Without such initiation, some developers may not spend time in learning complex concepts, even less a language specifically dedicated to high-level CP-style modeling. 
\MICE~is thus not a competitor to such languages anymore. This idea motivates the design of an API in a popular programming language, where a model just consists of method calls for stating variables and constraints. 

As CP toolkits are not restricted to a list of linearized global constraints, we provide new theoretical contributions regarding non linear constraints with arithmetic operators (e.g., $z=x_i \times x_j$ or $z=x_i^k$), negative and positive  
tables, and reification. 
We propose extensions of the CP constraints \DEVIATION~and \INTERDISTANCE~to numerical variables. Such case studies show the benefits of the approach for 
tackling problems that usually require to hybridize discrete and continuous solvers, providing that real constraints can be linearized. Efficiency of solving may then be 
very good when \MICE~is associated with a commercial MILP solver. A result of our research is that 
all new decompositions useful to design \MICE~revealed to be quite simple. 
\end{sloppypar}
\section{Tables, Arithmetic Constraints and Logical Operators}
MILP models support linear constraints, objectives and linearized logical constraints. CP models 
make no restriction about the constraints. 
On the other hand, CP variables must range over a finite domain of values. This domain constraint can be encoded by linear constraints, 
which permitted to linearize many CP global constraints~\cite{wal1997,DBLP:conf/cp/Refalo00,belall16}. 
\begin{lemma}[Domain linearization (from Refalo's statement~\cite{DBLP:conf/cp/Refalo00})]\label{lem:domain}~\\
Let $D = \{v_1, \ldots, v_m \}$ be the domain union of a model. All the domains $\{d_1,\ldots,d_n\}$ of variables in $X$ can be stated using a set $B$ of $O(\sum_{i \in \{1,\ldots,n\}} |d_i|)$ binary variables and $O(n)$ inequalities. 
$\forall v_j \in D$, $\forall x_i \in X$, if $v_j \in d_i$ state a binary variable $b_{ij}$. $\forall x_i$, state:   
$(\sum_{v_j \in d_i} v_jb_{ij}) - x_i = 0~~~\textrm{and}~~\sum_{b_{ij}: v_j \in d_i} b_{ij}= 1.$
\end{lemma}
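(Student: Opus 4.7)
The plan is to verify separately the three assertions embedded in the statement: (i) the encoding correctly captures $x_i \in d_i$ for every $i$, (ii) the number of binary variables introduced is $O(\sum_{i=1}^n |d_i|)$, and (iii) the number of stated linear (in)equalities is $O(n)$. I would begin with correctness, since the counting then follows by direct inspection of the construction.

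For correctness, I would fix an arbitrary $x_i$ and argue in both directions. Given binary values $b_{ij} \in \{0,1\}$ for each $v_j \in d_i$ satisfying $\sum_{v_j \in d_i} b_{ij} = 1$, integrality forces exactly one index $j^\star$ with $b_{ij^\star}=1$ and all other $b_{ij}=0$. Substituting into $\sum_{v_j \in d_i} v_j b_{ij} = x_i$ yields $x_i = v_{j^\star}$; since $b_{ij^\star}$ was only introduced because $v_{j^\star} \in d_i$, we conclude $x_i \in d_i$. Conversely, any valid assignment $x_i = v_{j^\star} \in d_i$ is encoded by setting $b_{ij^\star}=1$ and the remaining $b_{ij}=0$, which trivially satisfies both equalities. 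Hence the feasible values of $x_i$ under the encoding are exactly the elements of $d_i$.

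For the counting, the construction introduces one binary variable $b_{ij}$ per pair $(i,j)$ with $v_j \in d_i$, giving exactly $\sum_{i=1}^n |d_i|$ binary variables, which is $O(\sum_{i=1}^n |d_i|)$. For each $x_i$ the construction states two equalities, totaling $2n$ equalities, i.e., $O(n)$; viewing each equality as at most two inequalities in standard MILP form leaves the asymptotic bound unchanged.

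I do not anticipate any substantive obstacle: the lemma essentially formalizes the Refalo encoding cited by the authors, so both the correctness argument and the bookkeeping are routine. The only point requiring minor care is the observation that the binary integrality of the $b_{ij}$ plays an essential role in turning the sum constraint into a ``select exactly one value'' selector; without it the constraint would merely enforce a convex combination of the allowed values rather than membership in $d_i$.
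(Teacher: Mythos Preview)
Your argument is correct and is precisely the standard justification for Refalo's domain encoding. Note, however, that the paper does not supply its own proof of this lemma: it is stated as a cited result from Refalo and left unproved, so there is nothing in the paper to compare your proposal against beyond the statement itself, which your three-part verification (correctness in both directions, variable count, constraint count) matches exactly.
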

Domains and table constraints can be represented by linear constraints. As tables are generic and domain are finite, there is no \emph{theoretical} dominance property between MILP and CP concerning modeling capabilities.
CP models are generally more concise than linear models with (global) constraint decompositions. However, some 
problems require table constraints (without semantics), e.g., in computational biology~\cite{DBLP:conf/ijcai/PetitP16}. 
Some constraint checkers may not be tractable, even for a ground solution (for instance, 
min. overlapping variable given a set of rectangles in a fixed plane, stated by length and width variables). 
At last, it is probably easier to efficiently extend a model with numerical variables using MILP, without the need to discretize space. 
%
Modeling in CP and MILP is therefore more complex to compare than it at first seems,  
which also justifies to investigate how a MILP solver performs as a solution technique for a CP modeler. 

A CP model is not restricted to a set of global constraints whose linear formulations are found in the literature. 
We need (non linear) constraints built from arithmetic operators, negative and positive tables, and efficient reification. 
We investigate appropriate solutions to these issues. Especially, we show that table constraints provide a good way to deal with 
arithmetic operators and can compactly be reified using linear constraints.
 
A decomposition of positive table constraints is introduced in~\cite{belall16}: add one binary variable $b_{\tau_k}$ per tuple  $\tau_k$, 
and state that each variable $x_i$ should be constrained by $x_i = \sum_{k=1}^{|T|} b_{\tau_k}.\tau_k[x_i]$. 
This formulation needs two additional constraints: (1) $\sum_{k=1}^{|T|} b_{\tau_k}\leq 1$. If not, a constraint on $\{x_1,x_2,x_3\}$ 
with $T = \{(1,1,1), (3,3,3)\}$ would be considered as satisfied if $x_1=x_2=x_3=4$. (2) $\sum_{k=1}^{|T|} b_{\tau_k}\geq 1$, to have at least one tuple satisfied. 
As the domain constraint (Lemma~\ref{lem:domain}) ensures unicity of values assigned to variables through some binary variables, 
we can suppress the first one using a different decomposition. Our new linearization is exclusively stated on 
binary variables and can be naturally modified to encode negative tables and/or compact reified tables.
\begin{lemma}[Positive table linearization]\label{lem:pos} Let $c$ be a constraint defined on $var(c) \subseteq X$ by a set 
$T = \{\tau_1, \ldots, \tau_{|T|}\}$ of positive tuples. We introduce a set of binary variables $B_T$, one-to-one mapped with 
tuples in $T$. $c$ can be represented by the domain decomposition 
of $X$ and $O(|T|)$ inequalities and binary variables. For each tuple $\tau_k \in T$ and its corresponding variable $b_{\tau_k} \in B_T$ state:
$|var(c)|b_{\tau_k} - (\sum_{i, v_j \in d_i: v_j = \tau_k[x_i]} b_{ij}) \leq 0.~(1)$
In addition, state once the constraint:
$\sum_{k=1}^{|T|} b_{\tau_k} \geq 1.$
\end{lemma}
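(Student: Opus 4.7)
The plan is to prove the two directions of correctness (any solution of $c$ yields a feasible point of the linearization, and conversely any feasible point projects to a solution of $c$) and then count variables and inequalities. I would rely on Lemma~\ref{lem:domain} to interpret the binary variables $b_{ij}$ as the characteristic indicators of the assignment, so that $b_{ij}=1$ iff $x_i = v_j$, with exactly one $b_{ij}$ set to $1$ per variable $x_i \in var(c)$.

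First I would read inequality $(1)$ semantically. For a fixed tuple $\tau_k$, the sum $S_k = \sum_{i,\, v_j \in d_i:\, v_j = \tau_k[x_i]} b_{ij}$ has at most $|var(c)|$ terms, each in $\{0,1\}$, and by the domain decomposition $S_k$ equals exactly the number of positions $i$ at which the current assignment agrees with $\tau_k$. Hence $|var(c)|\,b_{\tau_k} - S_k \leq 0$ is equivalent to the implication ``$b_{\tau_k}=1 \Rightarrow S_k = |var(c)|$'', i.e.\ the assignment coincides with $\tau_k$ on every variable of $var(c)$. When $b_{\tau_k}=0$ the inequality is trivially satisfied since $S_k \geq 0$.

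Next I would establish the two directions. \emph{Completeness}: given a tuple $\tau_{k^*}\in T$ and an assignment $x$ matching $\tau_{k^*}$, set $b_{\tau_{k^*}}=1$, all other $b_{\tau_k}=0$, and set the $b_{ij}$'s according to Lemma~\ref{lem:domain}; then $S_{k^*}=|var(c)|$ so $(1)$ holds for $k^*$, $(1)$ holds trivially for the other $k$, and $\sum_k b_{\tau_k} = 1 \geq 1$. \emph{Soundness}: from $\sum_k b_{\tau_k}\geq 1$, at least one $b_{\tau_{k^*}}$ equals $1$; by the implication just derived, the assignment encoded by the $b_{ij}$'s equals $\tau_{k^*}\in T$, so $c$ is satisfied. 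Together with Lemma~\ref{lem:domain} ensuring uniqueness of the value assigned to each $x_i$, this also shows that we do not need the extra $\sum_k b_{\tau_k}\leq 1$ constraint: multiple $b_{\tau_k}$'s may be $1$, but each such tuple must equal the unique current assignment, which is harmless.

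Finally I would tally the size: we add $|T|$ binary variables $b_{\tau_k}$, $|T|$ inequalities of type $(1)$, and one additional inequality $\sum_k b_{\tau_k} \geq 1$, all on top of the $O(\sum_i |d_i|)$ variables and $O(n)$ inequalities already produced by Lemma~\ref{lem:domain}; this gives the announced $O(|T|)$ extra inequalities and binary variables. I do not expect a serious obstacle here; the only subtle point is arguing that dropping the ``$\leq 1$'' constraint is safe, which is precisely where the domain-decomposition invariant (unique value per variable) is used to prevent the pathology exhibited in the previous decomposition.
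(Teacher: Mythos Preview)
Your proposal is correct and follows essentially the same approach as the paper: interpret $(1)$ as the implication ``$b_{\tau_k}=1 \Rightarrow$ the assignment equals $\tau_k$'', then use $\sum_k b_{\tau_k}\geq 1$ to force at least one allowed tuple, with Lemma~\ref{lem:domain} guaranteeing a unique value per variable. The only nuance is in justifying why $\sum_k b_{\tau_k}\leq 1$ is unnecessary: the paper argues this sum is automatically at most $1$ because two distinct tuples of $T$ cannot both match the unique assignment, whereas you argue (slightly more robustly) that even if several $b_{\tau_k}$ were $1$ they would all witness the same assignment and hence be harmless for correctness---these are two phrasings of the same observation.
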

\begin{proof}
For each tuple, constraint \emph{(1)} ensures that if $b_{\tau_k}=1$ then 
any variable $x_i \in var(c)$ takes the value $\tau_k[x_i]$. 
From such tuple constraints and Lemma~\ref{lem:domain},  $\sum_{k \in \{a,\ldots,|T|\}} b_{\tau_k}$ is at most equal to $1$, otherwise two tuples in $T$ would be equal.   
The last inequality constrains $\sum_{k \in \{a,\ldots,|T|\}} b_{\tau_k}$ to be at least equal to $1$, i.e., the assignment of $var(c)$ corresponds to one tuple of $T$. 
\qed
\end{proof} 

\begin{lemma}[Negative table linearization]\label{lem:neg} Let $c$ be a constraint defined on $var(c) \subseteq X$ by a set 
$T = \{\tau_1, \ldots, \tau_{|T|}\}$ of forbidden tuples. $c$ can be represented by the domain decomposition 
of $X$ and $O(|T|)$ inequalities. $\forall \tau_k \in T$ state:
$ (\sum_{i, v_j \in d_i: v_j = \tau_k[x_i]} b_{ij})  \leq |var(c)| - 1.$
\end{lemma}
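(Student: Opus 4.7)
The plan is to prove the lemma in two small steps: first translate the left-hand side of the inequality into a meaningful count, then argue equivalence with avoidance of the forbidden tuple in both directions.

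First I would invoke Lemma~\ref{lem:domain}: for every $x_i$, the domain decomposition guarantees that exactly one $b_{ij}$ equals $1$, namely the one whose index corresponds to the current value of $x_i$. Using this, the sum $\sum_{i,\,v_j \in d_i:\, v_j = \tau_k[x_i]} b_{ij}$ can be reinterpreted as counting precisely the number of variables $x_i \in var(c)$ such that $x_i = \tau_k[x_i]$. In particular the sum is at most $|var(c)|$, and it attains $|var(c)|$ if and only if the current assignment of $var(c)$ is exactly the tuple $\tau_k$.

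Next I would argue both directions. If the assignment equals $\tau_k$, the sum equals $|var(c)|$, which violates the stated inequality, so the constraint rules out precisely this forbidden assignment. Conversely, if the assignment differs from $\tau_k$ in at least one position, the sum is at most $|var(c)|-1$ and the inequality holds. Applying this argument to every $\tau_k \in T$ shows that the conjunction of the $|T|$ inequalities is equivalent to forbidding exactly the tuples in $T$, which is the semantics of~$c$. Finally, counting the inequalities gives one per tuple, i.e.\ $O(|T|)$ inequalities, matching the claim.

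The proof is essentially mechanical once the re-interpretation of the sum via Lemma~\ref{lem:domain} is in place; no additional binary variables (unlike the positive case) are needed because a forbidden tuple corresponds to an upper bound on this count rather than to a case-by-case enforcement. The only point that deserves care is making clear that no extra $\leq 1$ or $\geq 1$ side constraint is required: both come for free from the domain decomposition, which already guarantees that each variable is assigned exactly one value, and hence each inequality is individually a sound and complete encoding of the avoidance of $\tau_k$.
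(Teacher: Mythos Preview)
Your proof is correct and follows essentially the same argument as the paper: use Lemma~\ref{lem:domain} to interpret the sum as the number of variables agreeing with $\tau_k$, then observe that forbidding $\tau_k$ is exactly the requirement that this count stay strictly below $|var(c)|$. The paper's version is terser and phrases only one direction explicitly, but the underlying reasoning is identical.
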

\begin{proof}
For $c$ to be satisfied, $\forall\tau_k \in T$ at least one variable $x_i \in var(c)$ should take a value $v_j \neq \tau_k[x_i]$. 
From Lemma~\ref{lem:domain}, if $x_i = v_j$ then $b_{ij} = 1$, otherwise $b_{ij} = 0$. 
$\sum_{i, v_j \in d_i: v_j = \tau_k[x_i]} b_{ij}$ must be $<|var(c)|$. \qed
\end{proof} 

Some constraints can be  
concisely and thus quite efficiently represented by tables, e.g., arithmetic operators. For instance, constraint $z = x_i^k$ where $z$ and $x_i$ are integer domain variables and 
$k$ a positive integer can be stated using $|d_i|$ allowed tuples. 
The constraint $z = x_i\times x_j$ requires $|d_i| \times |d_j|$ allowed tuples to be stated. 
Many other constraints are also appropriate for table-based linear representation.  
The \ELEMENT~constraint can be represented using Lemma~\ref{lem:pos}, thanks to its positive tuples. 
This formulation is distinct from the existing ones~\cite{DBLP:conf/aaai/HookerOTK99,DBLP:conf/cp/Refalo00} and does not add integer variables for indexes. 
\begin{lemma}[\ELEMENT~linearization]\label{lem:element}
Let $x_i$ and $x_j$ be two integer domain variables and $t$ an array of integers, not necessarily all distinct. 
\emph{\ELEMENT}~is statisfied if an only if  $x_j$ is equal to the $x_i^{th}$ value in the array $t$. 
From Lemma~\ref{lem:pos}, \emph{\ELEMENT} on ($x_i,x_j$) can be decomposed using $O(\min(|t|,|d_i|,|d_j|))$ inequalities and binary variables.
\end{lemma}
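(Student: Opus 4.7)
The plan is to realize \ELEMENT~as a positive table constraint on the pair $(x_i,x_j)$ and then invoke Lemma~\ref{lem:pos}. First I would construct the table $T$ explicitly: for every $v \in d_i$ that is a valid index of $t$ (i.e., $1 \leq v \leq |t|$) and such that $t[v] \in d_j$, insert the tuple $(v,t[v])$ into $T$. By definition an assignment $(x_i,x_j)=(v,w)$ satisfies \ELEMENT~if and only if $(v,w) \in T$, so the positive table on $\{x_i,x_j\}$ captures the constraint exactly. Feeding this table into Lemma~\ref{lem:pos} yields $O(|T|)$ inequalities and binary variables on top of the domain linearization of $x_i$ and $x_j$ given by Lemma~\ref{lem:domain}.

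Next I would establish the $\min$ bound by comparing three encodings of the same constraint. The tuples of $T$ are keyed by distinct values of $x_i$, so $|T| \leq |d_i|$; the same enumeration is bounded by the length of the array, giving $|T| \leq |t|$. For the $|d_j|$ factor I would switch to a value-centric encoding: for each $w \in d_j$ state the single inequality $b_{j,w} \leq \sum_{v \in d_i,\, t[v]=w} b_{i,v}$, where the $b$'s are the domain binaries produced by Lemma~\ref{lem:domain}. This linearizes the implication ``$x_j = w \Rightarrow x_i \in \{v : t[v]=w\}$''; combined with the unicity $\sum_w b_{j,w} = 1$ and $\sum_v b_{i,v} = 1$ inherited from Lemma~\ref{lem:domain}, it is equivalent to \ELEMENT. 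This dual encoding introduces $|d_j|$ inequalities and no fresh binary variables. Taking the best of the three encodings delivers the $O(\min(|t|,|d_i|,|d_j|))$ bound claimed by the lemma.

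The main obstacle I expect is the $|d_j|$ case, since the direct application of Lemma~\ref{lem:pos} to $T$ only yields $\min(|t|,|d_i|)$. Justifying the dual formulation requires verifying that the one-directional implication encoded above, together with the domain-linearization equalities, already forces $x_j = t[x_i]$: the unique $v$ with $b_{i,v}=1$ determines a unique image $w = t[v]$, and the unicity constraint on $\sum_w b_{j,w}$ then forces $b_{j,w}=1$ because every $b_{j,w'}$ with $w' \neq w$ is upper-bounded by a sum that evaluates to $0$. The $|t|$ and $|d_i|$ bounds are immediate corollaries of Lemma~\ref{lem:pos} applied to $T$ and require no further argument.
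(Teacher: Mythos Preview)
Your construction of the table $T$ and the appeal to Lemma~\ref{lem:pos} reproduce the paper's proof verbatim; the paper's entire argument is the single line ``$\forall t[k] \in t$, add the tuple $(k,t[k])$ in the table if $k \in D(x_i)$ and $t[k] \in D(x_j)$.'' The paper stops there and gives no separate justification for the $|d_j|$ term in the $\min$, and in fact that bound does \emph{not} follow from the size of $T$ alone, since distinct indices $k$ may share the same image $t[k]$ and hence $|T|$ can exceed $|d_j|$. Your value-centric encoding for that case is therefore a genuine addition beyond the paper: it is correct (the one-sided inequalities together with the unicity constraints from Lemma~\ref{lem:domain} do force $x_j=t[x_i]$, exactly as you verify), it uses only $|d_j|$ inequalities and no fresh binaries, and it closes the gap the paper leaves open. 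The only mismatch with the lemma's wording is that this third encoding does not literally go ``from Lemma~\ref{lem:pos}'', but that is a cosmetic issue with the statement rather than a flaw in your argument.
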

\begin{proof} 
$\forall t[k] \in t$, add the tuple $(k,t[k])$ in the table if $k \in D(x_i)$ and $t[k] \in D(x_j)$. 
\qed
\end{proof}

For sake of space, we do not provide a broader list of constraints that would likely be represented by tables. 
In some cases, however, the number of tuples may be prohibitive. Let us consider one noticeable case against tables: \ALLDIFF, 
that holds iff $\forall (x_i,x_j) \in var(c), x_i \neq x_j$. 
A concise ad-hoc linearization exists~\cite{wal1997,DBLP:conf/cp/Refalo00}.  
In contrast, a positive table would have a number of tuples of the order of magnitude of the number of permutations of the values 
 in domains. 
The case of a negative table is worst. 
As some constraints may be difficult to linearize in an ad hoc fashion, we consider two classical CP notions that may help for using tables: 
decompositions and implied constraints. 
\paragraph{Ad Hoc CP Decompositions.}
In the CP literature, many papers put the focus on decomposing constraints not only using linear equations, 
but using other (simpler) constraints standardly provided by CP solvers (see, e.g.,~\cite{DBLP:conf/ijcai/BessiereKNQW09}). In MILP 
there is no obvious link between the use of \emph{global constraints}~\cite{DBLP:conf/cp/BessiereH03} and solving process efficiency, as such constraints must be expressed through linear equations. 
In the case of \ALLDIFF, a well-known decomposition is to state for each variable pair a binary inequality. 
Although representing a binary inequality through table constraints is, to say the least, convoluted, we may note that decomposing makes possible a table-based formulation (using $O(d)$ negative tuples per inequality). 
Concerning systematic decomposition schemes, the generic automaton based decomposition of the global constraint catalog~\cite{DBLP:journals/constraints/BeldiceanuCDP07} is well suited to table constraint  based 
linearization, as transition and signature constraints often correspond to positive tables with small scopes. 
It can be applied on those constraints that can be checked by scanning once through their variables. 
For instance, using Lemmas~\ref{lem:pos} and~\ref{lem:neg}, Example 2 of section 3 in~\cite{DBLP:journals/constraints/BeldiceanuCDP05} provides 
an exploitable linear decomposition for lexicographical constraints. 
\paragraph{Implied Constraints.} 
The \GCC~\cite{DBLP:conf/aaai/Regin96} would likely be implied to any other one, as it refers to occurrences of values in solutions. 
The \GCC~has some particular properties that make it easy to learn from a model~\cite{DBLP:conf/ijcai/BessiereCP07}. Therefore, it can be an efficient tool 
for reducing tuples of a negative table (all tuples violating the \GCC~are not added to the table). 
%
%
\begin{definition}[\GCC]\label{def:gcc}
Let $X=\{x_1,\ldots,x_n\}$ be a set of variables, $t$ an array of values. Each $t[k] \in t$ is associated with two 
integers $\underline{t[k]}$ and $\overline{t[k]}$, $0 \leq \underline{t[k]} \leq \overline{t[k]}$. 
\emph{\GCC} holds if and only if for all indexes $k$: 
$\underline{t[k]}\leq |\{ i: x_i = t[k] \}| \leq \overline{t[k]}.$
\end{definition}
The next Lemma provides a linear formulation of \GCC~that adds $O(|t|)$ binary inequalities to the model. It differs from the existing one~\cite{DBLP:conf/cp/Refalo00} because it is related to a simpler version of the \GCC. In 
Definition~\ref{def:gcc}, bounds on value occurrences are stated using integer values, not variables. In our context variables are useless. 
\begin{lemma}[\GCC~linearization]\label{lem:gcc}
We use the notations of 
Lemma~\ref{lem:domain} and Definition~\ref{def:gcc}. 
For all $t[k] \in t, B_{t[k]} = \{ b_{ij} \in B: v_j = t[k]\}$.  
\emph{\GCC}~can be represented by the domain decomposition 
of $X$ and $O(|t|)$ inequalities. $\forall t[k] \in t$ state:
$(\sum_{b \in B_{t[k]}} b) \geq \underline{t[k]} \textrm{~and~} (\sum_{b \in B_{t[k]}} b) \leq \overline{t[k]}.$
\end{lemma}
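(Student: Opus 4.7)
The plan is to reduce \GCC~directly to the domain decomposition of Lemma~\ref{lem:domain} and then read off the bounds on occurrences as sums of binary indicator variables. First I would observe that, by Lemma~\ref{lem:domain}, for every variable $x_i$ and every value $v_j \in d_i$, the binary variable $b_{ij}$ equals $1$ if and only if $x_i = v_j$, and exactly one $b_{ij}$ is equal to $1$ per variable (by the equality $\sum_{b_{ij}: v_j \in d_i} b_{ij} = 1$). Hence, for a fixed value $t[k]$, the sum $\sum_{b \in B_{t[k]}} b$ counts exactly the number of variables $x_i$ in $X$ that are assigned the value $t[k]$ in the current solution, i.e., $|\{ i : x_i = t[k] \}|$.

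Next I would rewrite the defining condition of \GCC~from Definition~\ref{def:gcc} using this identity: the constraint $\underline{t[k]} \leq |\{ i : x_i = t[k] \}| \leq \overline{t[k]}$ becomes exactly the two inequalities $\sum_{b \in B_{t[k]}} b \geq \underline{t[k]}$ and $\sum_{b \in B_{t[k]}} b \leq \overline{t[k]}$. Both directions are immediate: any integer assignment of $X$ satisfying the domain decomposition induces a $0/1$ assignment of the $b_{ij}$ satisfying the stated inequalities iff each occurrence count lies in $[\underline{t[k]}, \overline{t[k]}]$, which is the \GCC~semantics.

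Finally I would count: two inequalities per value $t[k] \in t$ yield $O(|t|)$ inequalities, added on top of the domain decomposition of $X$; no new binary variable is introduced beyond those of Lemma~\ref{lem:domain}, since $B_{t[k]}$ reuses the existing $b_{ij}$'s. This matches the claimed bound.

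The only subtle point, which I would spell out carefully, is that values $t[k]$ not appearing in any domain $d_i$ give $B_{t[k]} = \emptyset$; in that case $\sum_{b \in B_{t[k]}} b = 0$, so the lower bound $\underline{t[k]}$ must be $0$ for feasibility, which is consistent with the \GCC~semantics (no variable can take such a value). Everything else is a straightforward substitution, so I do not anticipate a real obstacle beyond being explicit about the correspondence between the counting cardinality and the binary sum.
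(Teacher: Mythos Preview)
Your proposal is correct and follows exactly the same approach as the paper: both reduce the \GCC\ semantics to the domain decomposition of Lemma~\ref{lem:domain}, using that $b_{ij}=1$ iff $x_i=v_j$ so that $\sum_{b\in B_{t[k]}} b$ counts occurrences of $t[k]$. The paper's proof is a single sentence stating this equivalence; your version simply spells out the counting argument, the $O(|t|)$ bound, and the $B_{t[k]}=\emptyset$ edge case more explicitly.
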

\begin{proof}
From Lemma~\ref{lem:domain}, $\forall b_{ij} \in B, b_{ij} = 1$ if and only if $x_i = v_j$. 
\qed
\end{proof}
Using a \GCC~as an implied constraint 
introduces only $O(|t|)$ inequalities and no additional variable. 
Let's come back to the \ALLDIFF~example.  It is worthwhile to notice that adding the implied \GCC~eliminates all the tuples of an \ALLDIFF~decomposed by a negative table. 
We obtain the ad hoc \ALLDIFF~Refalo's linearization from Lemma~\ref{lem:gcc}. This is an extreme case but, more generally, 
in practical problems implied \GCC's~can have shrink bounds~\cite{DBLP:conf/ijcai/BessiereCP07}.
\paragraph{Reification.}
Instead of merely posting a constraint $c$ it is often useful to reflect its truth value into a binary variable $r_c$. This process is called \emph{reification}. It permits to express 
logical constraints such as $c_i \vee c_j$, or $\neg c$. Table constraints reification relies to ideas of Koster's partial constraint satisfaction formulation in the context 
of frequency assignment problems~\cite{koster}.  
Basically, we can extend the table with variable $r_c$. 
Naively one may state that $r_c$ is equal to $1$ for allowed tuples \emph{and} $0$ for forbidden ones. 
The number of tuples would be all the combinations of values of the cartesian product of $|var(c)|$. 
We propose a decomposition that does not increase the number of tuples of $c$. 
\begin{lemma}[Table constraint reification]\label{lem:reif} We keep the notations of Lemma~\ref{lem:pos}. The set $T$ can either represent allowed or forbidden tuples. 
The reification of a table constraint $c$ can be represented by the domain decomposition 
of $X$, the binary variable $r_c$ used to express the truth value of $c$ and $O(|T|)$ inequalities and binary variables.
\emph{(1)} $\forall \tau_k \in T$ state:
$|var(c)|b_{\tau_k} -  (\sum_{i, v_j \in d_i: v_j = \tau_k[x_i]} b_{ij})  \leq 0$, and:
$(\sum_{i, v_j \in d_i: v_j = \tau_k[x_i]} b_{ij})  - |var(c)|b_{\tau_k}  \leq |var(c)|-1.$
\emph{(2)} In addition, if tuples in $T$ are allowed tuples state:
$(\sum_{k \in \{a,\ldots,|T|\}} b_{\tau_k}) - r_c = 0,$
otherwise state:
$(\sum_{k \in \{a,\ldots,|T|\}} b_{\tau_k}) + r_c = 1.$
\end{lemma}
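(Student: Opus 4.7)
The plan is to introduce, for each tuple $\tau_k$, the quantity $S_k = \sum_{i, v_j \in d_i: v_j = \tau_k[x_i]} b_{ij}$ and observe two things that make the two inequalities in (1) tight. First, by the domain decomposition (Lemma~\ref{lem:domain}), for every $x_i$ exactly one $b_{ij}$ equals $1$, so $S_k$ simply counts the positions $i$ at which the current assignment agrees with $\tau_k$; hence $0 \le S_k \le |var(c)|$, with $S_k = |var(c)|$ if and only if the current assignment of $var(c)$ matches $\tau_k$. Second, $r_c$ and $b_{\tau_k}$ are binary.

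I would then unpack the two inequalities of (1). Rewriting them as $|var(c)|\, b_{\tau_k} \le S_k$ and $S_k \le |var(c)|\, b_{\tau_k} + |var(c)| - 1$, the first forces $S_k = |var(c)|$ whenever $b_{\tau_k} = 1$, i.e.\ if $b_{\tau_k}=1$ then the assignment matches $\tau_k$; the second forces $b_{\tau_k} = 1$ whenever $S_k = |var(c)|$ (since then $|var(c)| \le |var(c)|\, b_{\tau_k} + |var(c)|-1$ forces $b_{\tau_k} \ge 1/|var(c)|$, hence $b_{\tau_k}=1$). Conversely, when $S_k < |var(c)|$ the second inequality is slack and the first forces $b_{\tau_k}=0$. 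So the conjunction of the two inequalities is equivalent to $b_{\tau_k} = 1 \iff \textrm{the current assignment equals } \tau_k$.

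Since an assignment of $var(c)$ can match at most one tuple, $\sum_{k=1}^{|T|} b_{\tau_k} \in \{0,1\}$: it equals $1$ iff the current assignment belongs to the table $T$, and $0$ otherwise. The rest is a case split on the semantics of $T$. If $T$ is a set of allowed tuples then $c$ is true iff the assignment lies in $T$, iff $\sum_k b_{\tau_k} = 1$, so the equation $\sum_k b_{\tau_k} - r_c = 0$ correctly defines $r_c$ as the truth value of $c$. If $T$ lists forbidden tuples then $c$ is true iff the assignment is outside $T$, iff $\sum_k b_{\tau_k} = 0$, so $\sum_k b_{\tau_k} + r_c = 1$ again encodes $r_c$ as the truth value of $c$. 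The size claim is immediate: two inequalities and one binary variable per tuple, plus one aggregating equation.

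The main obstacle is the bidirectional linking between $b_{\tau_k}$ and the ``tuple $\tau_k$ matches'' condition; the usual one-directional clause used in Lemma~\ref{lem:pos} is not enough here, because for reification we must also detect when \emph{no} tuple matches. The neat point to check carefully is that the second inequality in (1) does not accidentally allow $b_{\tau_k}=1$ when $S_k < |var(c)|$; integrality of $b_{\tau_k}$ combined with the first inequality is what closes this gap.
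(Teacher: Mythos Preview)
Your proposal is correct and follows essentially the same approach as the paper's proof: both analyse the two inequalities in (1) to establish the biconditional $b_{\tau_k}=1 \iff$ the assignment matches $\tau_k$, then use the domain decomposition to bound $\sum_k b_{\tau_k}\le 1$, and finally split on the positive/negative semantics for (2). Your version is a bit more explicit (introducing $S_k$, arguing both directions of the biconditional and the size claim separately), but the underlying argument is the same.
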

\begin{proof} $\forall \tau_k \in T$ the first inequality ensures that 
if $b_{\tau_k}=1$ then all $x_i \in var(c)$ take the value $\tau_k[x_i]$. If $b_{\tau_k}=0$ it is always satisfied. 
The second one ensures that if $b_{\tau_k}=0$ not all the variables  $x_i \in var(c)$ take the value $\tau_k[x_i]$. 
If $b_{\tau_k}=1$ it is always satisfied. From Lemma~\ref{lem:domain},  $\sum_{k \in \{a,\ldots,|T|\}} b_{\tau_k} \leq 1$.  From \emph{(2)}, 
$r_c = 1$ if and only if the variables in $var(c)$ are fixed with a tuple of $T$ (positive table) or not in $T$ (negative table).
\qed
\end{proof} 
If the \GCC~is added to the model as an implied constraint related to a reified table, we also need to reify it. For this purpose, or simply as this may be useful in models, we introduce a new linear decomposition.  
\begin{lemma}[\GCC~reification]\label{lem:reigcc}
We use Lemma~\ref{lem:gcc} notations. Let $r_c$ be the binary variable for the truth value of the \emph{$\GCC$} to be reified. 
We create $O(|t|)$ binary variables $r^{k+}_c$, mapped with values in $t$ and $O(|t|)$ binary variables $r^{k-}_c$, also mapped. 
$R$ is the set of  all $r^{k-}_c$ and all $r^{k+}_c$ variables, of size $2|t|$. 
The reifed \emph{$\GCC$} is obtained by the domain decomposition of $X$. In addition, $\forall t[k] \in t$ state:
{
$$(\sum_{b \in B_{t[k]}} b) - \underline{t[k]}r_c^{k-} \geq 0.~~~~~~~~~~~~~~~~(1) 
~~~~~~~~~~~~~~~~(\sum_{b \in B_{t[k]}} b) - r_c^{k-}(n + 1) \leq \underline{t[k]} -1.~~~~~~~(2)$$
$$(\sum_{b \in B_{t[k]}} b) + (\overline{t[k]}+1)r_c^{k+} \geq \overline{t[k]}+1~~~~~~~~~~~~~~~~~~~~~~~~~~
(\sum_{b \in B_{t[k]}} b) + nr_c^{k+} \leq \overline{t[k]}+n.$$
}
Moreover, state once:~~
$(\sum_{r\in R} r) -r_c \leq 2|t| -1.~~~~(3)$~~~~~~~~~$(\sum_{r\in R} r) - 2|t|r_c \geq 0.~~~~~(4)$
\end{lemma}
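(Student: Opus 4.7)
The plan is to show that, value by value, the auxiliary binaries $r_c^{k-}$ and $r_c^{k+}$ correctly indicate satisfaction of the lower and upper occurrence bounds, and then show that (3)--(4) correctly aggregate these indicators into $r_c$.

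First I would set the notation. By Lemma~\ref{lem:domain}, for each value $t[k]$ the sum $S_k = \sum_{b \in B_{t[k]}} b$ counts exactly the number of variables of $X$ assigned to $t[k]$; in particular $0 \le S_k \le n$. The GCC holds iff, for every $k$, $\underline{t[k]} \le S_k \le \overline{t[k]}$.

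Next I would do a case analysis on $r_c^{k-}$ to prove it is the indicator of the lower-bound condition. If $r_c^{k-}=1$, then (1) becomes $S_k \ge \underline{t[k]}$ while (2) becomes $S_k \le \underline{t[k]} + n$, which is automatic since $S_k \le n \le \underline{t[k]}+n$. If $r_c^{k-}=0$, then (1) degenerates to $S_k \ge 0$ (automatic) while (2) forces $S_k \le \underline{t[k]}-1$. Hence $r_c^{k-}=1 \Leftrightarrow S_k \ge \underline{t[k]}$. The symmetric case analysis on the upper-bound pair shows $r_c^{k+}=1 \Leftrightarrow S_k \le \overline{t[k]}$: when $r_c^{k+}=1$, the first inequality becomes $S_k \ge 0$ and the second becomes $S_k \le \overline{t[k]}$; when $r_c^{k+}=0$, the first becomes $S_k \ge \overline{t[k]}+1$ and the second becomes $S_k \le \overline{t[k]}+n$, which is automatic. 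The key point to verify here is that the big-$M$ constants $n+1$, $\overline{t[k]}+1$, and $n$ are exactly large enough to switch off one inequality while the other fires; this is the main step where one must be careful, because a too-small coefficient would leave the indicator underdetermined in one direction, and a too-large one would be wasteful but harmless.

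Combining the two equivalences, the GCC of Definition~\ref{def:gcc} is satisfied iff $r_c^{k-}=r_c^{k+}=1$ for all $k$, i.e., iff $\sum_{r\in R} r = 2|t|$, since $|R|=2|t|$ and each variable in $R$ is binary.

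Finally I would verify that (3) and (4) force $r_c$ to equal this global conjunction. If $r_c=1$, then (3) gives $\sum_{r\in R} r \le 2|t|$ (which is automatic) and (4) gives $\sum_{r\in R} r \ge 2|t|$, so the sum must equal $2|t|$ and the GCC holds. If $r_c=0$, then (4) gives $\sum_{r\in R} r \ge 0$ (automatic) and (3) gives $\sum_{r\in R} r \le 2|t|-1$, so at least one $r\in R$ is $0$ and the GCC is violated. Hence $r_c$ is $1$ iff the GCC holds, which is the desired reification. \qed
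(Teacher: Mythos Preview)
Your proof is correct and follows essentially the same approach as the paper: a case analysis on each $r_c^{k-}$ (and symmetrically $r_c^{k+}$) to show it indicates satisfaction of the corresponding occurrence bound, followed by a case analysis on $r_c$ using (3)--(4) to show it equals the conjunction. You are in fact slightly more explicit than the paper, spelling out the upper-bound case and checking that the big-$M$ coefficients are large enough, but the structure is identical.
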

\begin{proof}
We first consider the case of one value $t[k] \in t$. Without loss of generality we restrict to $\underline{t[k]}$ (the case of maximum occurence is symmetrical). 
 If $r^{-k}_c = 0$ then \emph{(1)} is always satisfied and \emph{(2)} is satisfied if and only if $\sum_{b \in B_{t[k]}} \leq \underline{t[k]} -1$, i.e., the 
minimum required number of occurrences of $t[k]$ is not reached. If $r^{k-}_c = 1$ then \emph{(2)} is always satisfied and \emph{(1)} is satisfied 
if and only if $(\sum_{b \in B_{t[k]}} b) \geq \underline{t[k]}$. 
Then, consider all values. If $\sum_{r\in R} r  = 2|t|$ then all lower and upper cardinalities are in the request bounds, 
$r_c = 1$ to satisfy constraint \emph{(3)}. Otherwise, \emph{(3)} is always satisfied. 
If $\sum_{r\in R} r < 2|t|$ then $r_c = 0$ to satisfy constraint \emph{(4)}. Otherwise, \emph{(4)} is always satisfied. 
The Lemma holds. 
\qed
\end{proof}
For sake of space we do not provide a broader list of reified global constraints. It is worth saying that the general idea behind reification is generic: isolate a satisfaction property and then 
use the ``Big-M" principle to obtain the boolean value. This general principle can also be used for the constraints on numerical variables presented in next section. 
We claim that its simplicity is a valuable result. 
\section{The \MICE~modeler}
\MICE~is a Java modeler devoted to solve CP models using any MILP engine that can be called from Java. 
\MICE~embeds predefined global constraints\footnote{A set of global constraints similar to existing CP solvers such as Choco or OR-Tools.}, tables, arithmetic and logical operators. 
The two central ideas of \MICE~design are the following.\\
1. \emph{Simplicity of use.} \MICE~is primarily designed to provide users who are not expert in optimization 
with an autonomous tool for solving CP models. A \MICE~model is just defined by stating variables and high-level constraints, set on integer or numerical variables,  
to tackle also discrete-continuous problems.\\
%
2. \emph{Modular design.} \MICE~provides its own API for stating linear constraints. Therefore, all predefined constraints as well as user decompositions are stated 
using \MICE~objects. To plug \MICE~to a new MILP solver, the main class, called \texttt{Solver}, should be augmented by a call to the method creating a new model in the MILP solver. In addition, 
a unique new class must be created. This class implements an interface that states the methods used to create mathematical variables and linear constraints within the MILP engine (this makes the link 
with \MICE~linear expressions), to call specific methods for running the MILP solver, limiting time, and some getters (value of a mathematical variable, solver statistics, etc.). 
\MICE~can solve satisfaction and optimization problems. 
Logical constraints are set through tables on reification variables. Given any two reified constraints $c_1$ and $c_2$, \texttt{ReifOR} is a table\footnote{To simplify the description we do not use Lemma~\ref{lem:reif}, although the two formulations can obviously be considered.} on the truth variables $r_1$ and $r_2$ 
and a new variable $r$: the allowed tuples on $\{r_1,r_2,r\}$ are $\{(0,0,0),(0,1,1),(1,0,1),(1,1,1)\}$. Stating \texttt{OR} is then $r=1$. The cases of \texttt{AND}, $\neg$ and $\rightarrow$ (implication)   
are similar. There is no limitation about the number of logical operator combinations on reified constraints.


\MICE~permits to use in the same model global constraints on integer and numerical variables. 
Consider that every variable $x$ comes up with a lower bound $\underline{x}$ 
and an upper bound $\overline{x}$ (without loss of generality, these values can be the minimum and maximum value encodable by the computer). 
Next definition (derived from the Santa Claus benchmark~\cite{DBLP:journals/corr/FagesCP14}) generalizes the \DEVIATION~constraint~\cite{DBLP:conf/cpaior/SchausDDR07}, stated to obtain balanced solutions to 
combinatorial problems, by considering a numerical variable for the mean instead of an (arbitrary) integer value.
\begin{definition}[Extended \DEVIATION]~\label{def:deviation}
Let $X=\{x_1,\ldots,x_n\}$ be a set of variables (integer or numerical).
Let $z$ and $s$ be {\bf numerical variables}. 
\emph{\DEVIATION}~holds if and only if: $nz = (\sum_{i=1}^n x_i)\textrm{~~~and~~~}s = (\sum_{i=1}^n |x_i - z|).$
\end{definition}
To linearize \DEVIATION~we first need to linearize the constraint $\mathit{abs} = |x-y|$, where $x,y$ and $\mathit{abs}$ are mathematical variables of any type. 
\begin{lemma}[$\mathit{abs} = |x-y|$~linearization]~\label{lem:dist}
We use the following notations: Given $\underline{x},\overline{x},\underline{y}$, and $\overline{y}$, $a$ is the minimum possible value of $|x-y|$ and $A$ the maximum possible value. 
$d$ is the minimum possible value of $x-y$ and $D$ the maximum possible value. 
We then distinguish three cases. 
\begin{enumerate}
\item $D\leq 0$. Add $\mathit{abs}   - y + x = 0$.
\item $d \geq 0$. Add $\mathit{abs}   - x + y = 0$.
\item $d < 0 < D$. Add $a \leq \mathit{abs} $ and $\mathit{abs}  \leq A$, define three variables $\mathit{dif}, \mathit{difp}$ and $\mathit{difn}$, a binary variable $b$ and state: 
{\small
$$\mathit{dif} - x + y = 0.~~~~~~~~~~~~~~~~~~~(1)~~~~~~~~~~0 \leq \mathit{difp}~~\textrm{and}~~\mathit{difp}\leq D.~~~~~~~~~~~~~~(2)$$
$$0 \leq \mathit{difn}~~\textrm{and}~~\mathit{difn}\leq |d|.~~~~~(3)~~~~~~~~~~d \leq \mathit{dif}~~\textrm{and}~~\mathit{dif}\leq D.~~~~~~~~~~~~~~~~~(4)$$
$$\mathit{dif} - \mathit{difp} + \mathit{difn} = 0.~~~~~~~~~~(5)~~~~~~~~~~\mathit{difp} - Db \leq 0.~~~~~~~~~~~~~~~~~~~~~~~~~~~~~~(6)$$
$$\mathit{difn}  + |d|b \leq |d|.~~~~~~~~~~~~~~~~~~~(7)~~~~~~~~~~\mathit{abs}  - \mathit{difp} - \mathit{difn}=0.~~~~~~~~~~~~~~~~~(8)$$
}
\end{enumerate}
\end{lemma}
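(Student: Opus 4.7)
The plan is to handle the three cases in the statement independently. For Cases 1 and 2 the sign of $x-y$ is pinned down by the bounds of its operands: $D\leq 0$ forces $x-y\leq 0$ pointwise, so $|x-y|=y-x$, which is exactly $\mathit{abs} - y + x = 0$; and $d\geq 0$ forces $x-y\geq 0$, yielding $\mathit{abs} = x-y$. I would dispose of both cases in a single sentence each, once the definitions of $d$ and $D$ as the minimum and maximum possible values of $x-y$ are recalled.

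The substantive work is Case 3, where the sign of $x-y$ is not determined by the bounds. The approach is the classical positive/negative part decomposition controlled by a Big-M switch. I would first use (1), (4) and (5) to establish that the auxiliary variable $\mathit{dif}$ equals $x-y$, lies in $[d,D]$, and admits the decomposition $\mathit{dif} = \mathit{difp} - \mathit{difn}$ with $\mathit{difp}, \mathit{difn}\geq 0$ by (2)–(3). The key step is then to show that $b$ acts as an indicator for the sign of $\mathit{dif}$: with $b=1$, constraint (7) collapses to $\mathit{difn}\leq 0$ and together with (3) forces $\mathit{difn}=0$, while (6) reduces to $\mathit{difp}\leq D$ and is already implied by (2); symmetrically, $b=0$ forces $\mathit{difp}=0$ via (6) and leaves $\mathit{difn}$ free in $[0,|d|]$ via (7). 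Combined with (5) and nonnegativity, this rules out the case where both $\mathit{difp}$ and $\mathit{difn}$ are strictly positive, so $\{\mathit{difp},\mathit{difn}\}=\{\max(\mathit{dif},0),\max(-\mathit{dif},0)\}$. Constraint (8) then gives $\mathit{abs}=\mathit{difp}+\mathit{difn}=|\mathit{dif}|=|x-y|$.

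Finally, I would verify that the constraint system has a feasible assignment for every $(x,y)$ in the domain box, by exhibiting $b=1$ when $x-y\geq 0$ and $b=0$ when $x-y<0$ (the boundary case $\mathit{dif}=0$ is consistent with either choice and yields $\mathit{abs}=0$). The framing inequalities $a\leq \mathit{abs}\leq A$ are strictly tightening and follow automatically from the preceding.

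The main obstacle, and the part I would check most carefully, is not the logic of the decomposition but the calibration of the Big-M coefficients $D$ and $|d|$ in (6)–(7): one has to confirm they are simultaneously large enough to keep the intended configurations feasible and tight enough to eliminate the spurious ones, independently of the numerical types of $x$ and $y$. Once that calibration is verified, Case 3 closes and the lemma follows.
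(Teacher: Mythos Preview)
Your proposal is correct and follows essentially the same argument as the paper: Cases~1 and~2 are immediate from the sign of $x-y$, and Case~3 uses the positive/negative part decomposition $\mathit{dif}=\mathit{difp}-\mathit{difn}$ with the binary switch $b$ forcing exactly one part to vanish via the Big-M constraints (6)--(7), after which (8) yields $\mathit{abs}=|x-y|$. The only cosmetic difference is that you case-split on the value of $b$ whereas the paper case-splits on which of $\mathit{difp},\mathit{difn}$ is strictly positive; your additional explicit feasibility check (exhibiting a valid $b$ for every $(x,y)$) is a welcome completeness point that the paper leaves implicit.
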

\begin{proof} Cases 1 and 2 are obvious. 
If $d < 0 < D$, constraints (2) (3) and (4) state the bounds of $\mathit{difp}$, $\mathit{difn}$ and $\mathit{dif}$. 
From (1),  $\mathit{dif} = x - y$. 
(5) states $x-y = \mathit{difp} - \mathit{difn}$. 
From (2) and (3) $\mathit{difp}\geq 0$ and $\mathit{difn}\geq 0$. We have: 
either $\mathit{difp}> 0$, then from (6) $b=1$, and from (7) $\mathit{difn}=0$: from (8), $\mathit{abs} =x-y$; 
or $\mathit{difn}> 0$, from (7) $b=0$, from (6) $\mathit{difp}=0$: from (8), $\mathit{abs} =y-x$;
otherwise, $\mathit{abs} =\mathit{difp}=\mathit{difn}=0$. 
\qed
\end{proof}
From Lemma~\ref{lem:dist} and Definition~\ref{def:deviation}, we decompose~\DEVIATION. 
%
%
\begin{lemma}[\DEVIATION~linearization]~\label{lem:deviation}
We denote by \emph{\DISTANCE}($x,y,\mathit{abs}$) the linearization of constraint $\mathit{abs} = |x-y|$ of Lemma~\ref{lem:dist}. Let $Z = \{\mathit{abs}_1, \ldots, \mathit{abs}_n\}$ be 
numerical variables. 
\emph{\DEVIATION}~(Definition~\ref{def:deviation}) can be represented by the following set of linear constraints: 
$\forall x_i \in X,  \textrm{\em\DISTANCE}(x_i,z,\mathit{abs}_i).$
In addition state once:  
$nz - (\sum_{i=1}^n x_i) = 0\textrm{~~~and~~~}s - (\sum_{i=1}^n \mathit{abs}_i) = 0.$
\end{lemma}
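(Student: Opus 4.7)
The proof plan is to verify a straightforward composition: the two stated scalar equations directly encode the mean and sum-of-deviations conditions of Definition~\ref{def:deviation}, while the $n$ calls to \DISTANCE\ supply the absolute-value semantics that tie the auxiliary variables $\mathit{abs}_i$ to $|x_i - z|$. The argument splits naturally into a soundness direction and a completeness direction, both essentially mechanical given Lemma~\ref{lem:dist}.

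First I would handle soundness. Assume the linear system is satisfied. The equality $nz - \sum_{i=1}^n x_i = 0$ is literally the first clause of Definition~\ref{def:deviation}. For each $i$, Lemma~\ref{lem:dist} applied to \DISTANCE$(x_i, z, \mathit{abs}_i)$ forces $\mathit{abs}_i = |x_i - z|$ regardless of which of its three cases is triggered by the bounds of $x_i$ and $z$. Substituting into $s - \sum_{i=1}^n \mathit{abs}_i = 0$ yields $s = \sum_{i=1}^n |x_i - z|$, which is the second clause.

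Next I would handle completeness. Given any assignment of $X$, $z$, and $s$ satisfying Definition~\ref{def:deviation}, define $\mathit{abs}_i := |x_i - z|$ for each $i$. Lemma~\ref{lem:dist} guarantees that the internal linear system of \DISTANCE$(x_i, z, \mathit{abs}_i)$ admits a feasible completion (assigning the auxiliary $\mathit{dif}$, $\mathit{difp}$, $\mathit{difn}$, $b$ according to the sign of $x_i - z$). The two global equations are then immediate from the two clauses of Definition~\ref{def:deviation}.

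The main obstacle, and the only point worth dwelling on, is ensuring that Lemma~\ref{lem:dist} is applicable uniformly across the $n$ subsystems. Since $z$ is numerical with a priori bounds $\underline{z}, \overline{z}$ derived from the model, each pair $(x_i, z)$ falls in exactly one of the three bound regimes ($D \leq 0$, $d \geq 0$, or $d<0<D$), and the bounds $a, A, d, D$ needed by Lemma~\ref{lem:dist} are finite. Once this is checked, the rest of the argument is just bookkeeping: the linear constraints in the lemma statement plus the internal constraints of each \DISTANCE\ block are exactly the conjunction of the two conditions of Definition~\ref{def:deviation}, which gives the equivalence and completes the proof.
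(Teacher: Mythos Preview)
Your argument is correct. Note, however, that the paper does not actually supply a proof for this lemma: it states the decomposition and moves on, treating the result as immediate from Lemma~\ref{lem:dist} and Definition~\ref{def:deviation}. Your soundness/completeness split and the remark about applicability of the three bound regimes are more than the paper offers, but they are exactly the right justification and match the implicit reasoning.
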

The \INTERDISTANCE~constraint~\cite{reg97,DBLP:conf/aaai/QuimperLP06} holds on integer variables $\{x_1,\ldots,x_n\}$ and a constant $p$ 
if and only if $|x_i-x_j|\geq p$ for all $i \neq j$. Our decomposition extends it to variables of any type and a numerical variable for $p$: 
$\forall x_i, x_j, i < j$, state   $\textrm{\DISTANCE}(x_i,x_j,\mathit{abs}_{ij})$ and $\mathit{abs}_{ij}\geq p$. 
\section{Experiments and Conclusion}
\begin{sloppypar}
In our experiments \MICE~is coupled with \GUROBI~\cite{gurobi} on a I7 with 8GB of RAM. 
We consider CP solvers that can be called from Java, and, to truly compare with classical CP, with an API for stating advanced search strategies. 
%
In this paper our goal is to compare with CP, not with hybrid techniques or systems adding implicit constraints to the model, even less to solvers specific to one particular problem. 
We consider solver hybridization in the discrete-continuous case, because discrete-continuous models can directly be stated using \MICE~providing that real constraints can be linearized. 

Table~1~focuses on satisfaction problems stem from Choco 3.3.3~\cite{choco3} and OR-Tools~\cite{van2014or} sample directories (see CSPlib~\footnote{http://www.csplib.org/} for 
descriptions) that use the \ALLDIFF~global constraint 
(we report the best result among the two solvers/different propagator options). 
We compare linear decompositions of \ALLDIFF~through negative binary tables and Refalo's formulation. Surprisingly, the results are quite similar using tables or not on this set of instances. 
\begin{table}[!t]\label{tab:sat}
\parbox{.48\linewidth}{
\begin{center}
\small
\scalebox{0.7}{
\begin{tabular}{|c|c|c|c|c|}
\hline
Problem & Ad hoc  & Bin. tables & CP & CP \\
             &\ALLDIFF &   (Lemma 3)  & default & best found \\
             &                  &                     & strategy & strategy \\ 
            \hline
   	      Sudoku 1 to 6 & { 0 sec.} & { 0 sec.} & 0 sec. & 0 sec.  \\
	        prob03-10 &  0.1 sec. & 0.1 sec.  & 0 sec. &  0 sec. \\
	     prob03-20 &  1.5 sec. & 2.1 sec.  &  0.2 sec. &  0.2 sec. \\
	     prob07-12 & 0.1 sec. &  0.1 sec. & 0.2 sec. & 0 sec. \\
	          prob07-14 & 0.3 sec. &  0.3 sec. & 0.7 sec. & 0.1 sec. \\
	        prob07-16 & 0.4 sec. &  0.4 sec. &  20 sec. &  0.1 sec.  \\
	        prob07-18 &  0.6 sec. &  0.6 sec. & $>$ 1 min. & 0.1 sec. \\
	        prob019-5 & {0 sec.} & 0.1 sec. & 0 sec. & 0 sec. \\
 	     prob019-6 & 1 sec. & 9.8 sec. & 1.7 sec. & 1.7 sec. \\
	     prob019-7 & 3.5 sec. & { 3.1 sec.}  & $>$ 1 min. &  6.4 sec. \\

 \hline
\end{tabular}
}
\end{center}
\caption{\MICE~with \GUROBI: satisfaction problems. The CP best strategies include 
specific static orders and more advanced techniques, such as impact based search with 
parameter values specific to a given instance (prob19-7). }
}
\hfill
\parbox{.47\linewidth}{
{
\begin{center}
\small
\scalebox{0.7}{
\begin{tabular}{ |p{1.1cm}|p{0.9cm}|p{1.7cm}|p{1.1cm}|p{0.9cm}|p{0.9cm}|p{0.9cm}| }
\hline
Series & \MICE & \MICE~av. & \MICE~av. & CP & CP av. & CP av. \\
(10 instances)                &   optimal proofs & obj. value & time (sec.) &  optimal proofs & obj. value & time (sec.) \\
 \hline
  	   bqp50 & 100\% & 1926.8 & 60.1 &  60\% & 1903.9 & 447.3  \\
	   \hline
g05\_20 & 100\% & 64.9 & 1.9  & 100 \% &  64.9 & 3.6 \\

\hline
g05\_30  & 100\% & 138.8  & 40.1 & 0 \% &  137.1 & 600 \\
\hline
g05\_40  & 40\% & 244.9 (3.2\%) & 549.1 & 0 \% & 228.2 & 600  \\
\hline
\end{tabular}
}
\end{center}
\caption{\MICE~with \GUROBI~vs CP: maximization problems.}
}
{
\begin{center}
\small
\scalebox{0.7}{
\begin{tabular}{|c|c|c|c|c|}
\hline
nb. kids & max. price & nb. gifts & \MICE  & Choco-Ibex \\
\hline
3 & 25 & 5 & 0 sec. & 0 sec. \\
6 & 50 & 10 & 0 sec. & 0.8 sec. \\
9 & 75 & 15 & 0.3 sec. & 371.6 sec. \\
12 & 100 & 20 & 32.3 sec. & $>$ 600 sec. \\ 
15 & 125 & 25 & 341.5 sec. & $>$ 600 sec. \\ 
\hline
\end{tabular}
}
\end{center}
\caption{\MICE~with \GUROBI~vs Choco-Ibex.}
}
}
\vspace{-0.7cm}
\end{table}
We do not consider, in a short paper, reproducing known results on optimization problems with ad hoc global constraint formulations~\cite{wal1997,DBLP:conf/cp/Refalo00,belall16} that are also implemented in \MICE. 
Using \MICE~in this context would lead to the same conclusions. 
Rather, Table~2~reports results about the CP-style modeling features of \MICE~to solve the Max-Cut non-linear (quadratic) optimization problem that occurs in 
physics applications~\cite{liers04}, without any handcrafted model transformation. 
Given an undirected graph with weighted edges $G_w=(V,E_w)$, Max-Cut is the problem of finding a cut in G of maximum weight. 
A variable $x_i$ is stated for each vertex in $V$. $x_i = 1$ if vertex $v_i$ is in S and $x_i=-1$ otherwise. We maximize $\frac{1}{2}\sum_{i<j} w_{ij}(1 - x_ix_j)$. 
We encoded this problem in Choco 3.3.3 and \MICE.
In \MICE, the default product constraint corresponds to a positive table. 
In CP, the best strategy we found is {\em DomOverWdeg}~\cite{DBLP:conf/ecai/BoussemartHLS04} 
and first assign vertices.
We used Beasley instances from the OR-Libary and problems from g05\_60 Rudy instances\footnote{See http://biqmac.uni-klu.ac.at/biqmaclib.html}, with a 10 min. time-limit. 
Graphs have respectively 50 nodes and weights in $[-100,100]$, and 30-50 nodes unweighted. 
At last, we evaluated our modeler on the Santa Claus benchmark, recently solved by hybridizing discrete and continuous CP solvers, namely Choco 3.3.3 and Ibex 2.3.1~\cite{ibex}. 
We 
use the model provided in~\cite{DBLP:journals/corr/FagesCP14}, with  
\ALLDIFF~and \ELEMENT~on integer variables and \DEVIATION~with a continuous variable for the mean. 
Table~3~shows time for proving the optimal solution, with a 10 minutes limit. 

Throughout the development of \MICE, we revisited linear formulations of global and table constraints and their reification, 
non linear constraints built from arithmetic/logical operators, 
and extensions of \DEVIATION~and \INTERDISTANCE~to numerical variables. 
New integer constraint decompositions exclusively 
involve new binary variables. An interesting result is that all linear formulations in \MICE~are simple. 
We do not claim that any problem can be tackled using a  black-box MILP-based CP modeler 
(scalability issues may even occur with the best ad hoc MILP models). Our results are good in the context of promoting the CP-style modeling with a simple library, implemented in  
a popular programming language. 
Future work includes implementing most recent advances in CP models linearization, e.g., domain refinement~\cite{belall16}, 
and link \MICE~with model acquisition systems~\cite{DBLP:conf/cp/BeldiceanuS12,DBLP:conf/ijcai/ArcangioliBL16}.  
\end{sloppypar}
\bibliographystyle{splncs03}
\bibliography{tp17}

\begin{thebibliography}{10}
\providecommand{\url}[1]{\texttt{#1}}
\providecommand{\urlprefix}{URL }

\bibitem{DBLP:conf/ijcai/ArcangioliBL16}
Arcangioli, R., Bessiere, C., Lazaar, N.: Multiple constraint acquisition. In:
  Proceedings of the Twenty-Fifth International Joint Conference on Artificial
  Intelligence, {IJCAI} 2016, New York, NY, USA, 9-15 July 2016. pp. 698--704
  (2016)

\bibitem{artigues12}
Artigues, C.: Scheduling and (integer) linear programming. \emph{CPAIOR 2012
  international master class} (2012)

\bibitem{DBLP:journals/constraints/BeldiceanuCDP05}
Beldiceanu, N., Carlsson, M., Debruyne, R., Petit, T.: Reformulation of global
  constraints based on constraints checkers. Constraints  10(4),  339--362
  (2005), \url{http://dx.doi.org/10.1007/s10601-005-2809-x}

\bibitem{DBLP:journals/constraints/BeldiceanuCDP07}
Beldiceanu, N., Carlsson, M., Demassey, S., Petit, T.: Global constraint
  catalogue: Past, present and future. Constraints  12(1),  21--62 (2007)

\bibitem{DBLP:conf/cp/BeldiceanuS12}
Beldiceanu, N., Simonis, H.: A model seeker: Extracting global constraint
  models from positive examples. In: Principles and Practice of Constraint
  Programming - 18th International Conference, {CP} 2012, Qu{\'{e}}bec City,
  QC, Canada, October 8-12, 2012. Proceedings. pp. 141--157 (2012)

\bibitem{belall16}
Belov, G., Stuckey, P.J., Tack, G., Wallace, M.: Improved linearization of
  constraint programming models. In: Principles and Practice of Constraint
  Programming - {CP} 2016, 22nd International Conference, {CP} 2016, Toulouse,
  France, September 5 - 9, 2016, Proceedings. vol. 9392 (2016)

\bibitem{DBLP:conf/ijcai/BessiereCP07}
Bessiere, C., Coletta, R., Petit, T.: Learning implied global constraints. In:
  {IJCAI} 2007, Proceedings of the 20th International Joint Conference on
  Artificial Intelligence, Hyderabad, India, January 6-12, 2007. pp. 44--49
  (2007)

\bibitem{DBLP:conf/cp/BessiereH03}
Bessiere, C., Hentenryck, P.V.: To be or not to be ... a global constraint. In:
  Principles and Practice of Constraint Programming - {CP} 2003, 9th
  International Conference, {CP} 2003, Kinsale, Ireland, September 29 - October
  3, 2003, Proceedings. pp. 789--794 (2003)

\bibitem{DBLP:conf/ijcai/BessiereKNQW09}
Bessiere, C., Katsirelos, G., Narodytska, N., Quimper, C., Walsh, T.:
  Decompositions of all different, global cardinality and related constraints.
  In: {IJCAI} 2009, Proceedings of the 21st International Joint Conference on
  Artificial Intelligence, Pasadena, California, USA, July 11-17, 2009. pp.
  419--424 (2009)

\bibitem{DBLP:conf/ecai/BoussemartHLS04}
Boussemart, F., Hemery, F., Lecoutre, C., Sais, L.: Boosting systematic search
  by weighting constraints. In: Proceedings of the 16th Eureopean Conference on
  Artificial Intelligence, ECAI'2004, Valencia, Spain, August 22-27, 2004. pp.
  146--150 (2004)

\bibitem{ibex}
Chabert, G., Neveu, B., Ninin, J., Jaulin, L., Trombettoni, G.: Ibex. Mines
  Nantes, ENPC Paris, ENSTA Brest, Univ. Montpellier II (2016),
  \url{http://www.ibex-lib.org}

\bibitem{DBLP:journals/corr/FagesCP14}
Fages, J., Chabert, G., Prud'homme, C.: Combining finite and continuous
  solvers. CoRR  abs/1402.1361 (2014), \url{http://arxiv.org/abs/1402.1361}

\bibitem{gurobi}
{G}urobi: Gurobi 6.5 Manual (\texttt{www.gurobi.com}) (2016)

\bibitem{DBLP:conf/aaai/HookerOTK99}
Hooker, J.N., Ottosson, G., Thorsteinsson, E.S., Kim, H.: On integrating
  constraint propagation and linear programming for combinatorial optimization.
  In: Proceedings of the Sixteenth National Conference on Artificial
  Intelligence and Eleventh Conference on Innovative Applications of Artificial
  Intelligence, July 18-22, 1999, Orlando, Florida, {USA.} pp. 136--141 (1999)

\bibitem{koster}
Koster, A.: Frequency assignment: models and algorithms. Ph.D. thesis,
  Maastricht University (1999)

\bibitem{liers04}
Liers, F.: Contributions to Determining Exact Ground-States of Ising
  Spin-Glasses and to their Physics. Ph.D. thesis, Universitat zu Koln (2004)

\bibitem{van2014or}
van Omme, N., Perron, L., Furnon, V.: {\em {OR}-{T}ools user’s manual}. Tech.
  rep., {G}oogle (2014)

\bibitem{DBLP:conf/ijcai/PetitP16}
Petit, T., Petit, L.: Optimizing molecular cloning of multiple plasmids. In:
  Proceedings of the Twenty-Fifth International Joint Conference on Artificial
  Intelligence, {IJCAI} 2016, New York, NY, USA, 9-15 July 2016. pp. 773--779
  (2016)

\bibitem{choco3}
Prud'homme, C., Fages, J.G., Lorca, X.: Choco3 Documentation. TASC, INRIA
  Rennes, LINA CNRS UMR 6241, COSLING S.A.S. (2014),
  \url{http://www.choco-solver.org}

\bibitem{DBLP:conf/cp/Puget04}
Puget, J.: Constraint programming next challenge: Simplicity of use. In:
  Principles and Practice of Constraint Programming - {CP} 2004, 10th
  International Conference, {CP} 2004, Toronto, Canada, September 27 - October
  1, 2004, Proceedings. pp. 5--8 (2004)

\bibitem{DBLP:conf/aaai/QuimperLP06}
Quimper, C., L{\'{o}}pez{-}Ortiz, A., Pesant, G.: A quadratic propagator for
  the inter-distance constraint. In: Proceedings, The Twenty-First National
  Conference on Artificial Intelligence and the Eighteenth Innovative
  Applications of Artificial Intelligence Conference, July 16-20, 2006, Boston,
  Massachusetts, {USA}. pp. 123--128 (2006)

\bibitem{DBLP:conf/cp/Refalo00}
Refalo, P.: Linear formulation of constraint programming models and hybrid
  solvers. In: Principles and Practice of Constraint Programming - {CP} 2000,
  6th International Conference, Singapore, September 18-21, 2000, Proceedings.
  pp. 369--383 (2000)

\bibitem{DBLP:conf/aaai/Regin96}
R{\'{e}}gin, J.: Generalized arc consistency for global cardinality constraint.
  In: Proceedings of the Thirteenth National Conference on Artificial
  Intelligence and Eighth Innovative Applications of Artificial Intelligence
  Conference, {AAAI} 96, {IAAI} 96, Portland, Oregon, August 4-8, 1996, Volume
  1. pp. 209--215 (1996)

\bibitem{reg97}
R{\'{e}}gin, J.: The global minimum distance constraint. Tech. rep., ILOG (IBM)
  (01 1997)

\bibitem{wal1997}
Rodosek, R., Wallace, M., Hajian, M.: A new approach to integrating mixed
  integer programming and constraint logic programming. Annals of Operational
  Research. Recent Advances in Combinatorial Optimization  (1997)

\bibitem{DBLP:conf/cpaior/SchausDDR07}
Schaus, P., Deville, Y., Dupont, P., R{\'{e}}gin, J.: The deviation constraint.
  In: Integration of {AI} and {OR} Techniques in Constraint Programming for
  Combinatorial Optimization Problems, 4th International Conference, {CPAIOR}
  2007, Brussels, Belgium, May 23-26, 2007, Proceedings. pp. 260--274 (2007)

\end{thebibliography}

\end{document}